\PassOptionsToPackage{table}{xcolor}
\documentclass{article}

\usepackage{iclr2027_conference,times}
\usepackage{amsmath,amssymb,mathtools}
\usepackage{booktabs}
\usepackage{threeparttable}
\usepackage{tabularx}
\usepackage{graphicx}
\usepackage{flafter}
\usepackage{microtype}
\usepackage{multirow}
\usepackage{xcolor}
\usepackage{tikz}
\usetikzlibrary{arrows.meta,positioning,fit}
\usepackage{amsthm}
\usepackage{hyperref}
\usepackage{url}
\usepackage{siunitx}
\usepackage{placeins}
\usepackage{wrapfig}

\definecolor{lgwmblue}{HTML}{1F5A94}

\hypersetup{
  colorlinks=true,
  linkcolor=lgwmblue,
  citecolor=lgwmblue,
  urlcolor=lgwmblue
}

\newtheorem{proposition}{Proposition}
\newcommand{\method}{\textsc{LGWM}}
\newcommand{\bench}{\textsc{RSWT-Bench}}
\newcommand{\rswt}{\textup{\textsc{RSWT}}}
\newcommand{\zhat}{\hat{\mathbf z}_{t+1}}
\newcommand{\zobs}{\mathbf z^{\mathrm{obs}}_{t+1}}
\newcommand{\ztarget}{\bar{\mathbf z}_{t+1}}
\newcommand{\zt}{\mathbf z_t}

\definecolor{oursgray}{RGB}{232,232,232}
\newcommand{\na}{\multicolumn{1}{c}{---}}

\title{Right Screen, Wrong Transition:\\ World Models as Verifiers for GUI Agents}

\author{Jiaming Zhang$^{1}$, Xuan Wang$^{1}$, Fuyao Zhang$^{1}$, \\
\textbf{Yang Cao$^{2}$, Lingjuan Lyu$^{3}$, Wei Yang Bryan Lim$^{1}$} \\
$^{1}$Nanyang Technological University \quad $^{2}$Institute of Science Tokyo \quad $^{3}$Sony AI \\
Project page: \url{https://jiamingzhang.netlify.app/lgwm/}}

\iclrfinalcopy

\begin{document}

\maketitle
\lhead{Preprint}
\begin{abstract}
A login screen that appears after a tap on \emph{Sign in} is expected; the
same screen after a tap on \emph{View order} is an attack.  For GUI agents,
safety is therefore a property of the transition rather than of the screen,
and a monitor that inspects only screens can be defeated by reusing a
legitimate one.  Judging a transition requires an expectation of what should
have followed the action.  Existing GUI world models provide one, but they
output it as text, code, or images, so checking it against the observed
screen requires a second model to judge the two.  We argue that a world
model meant for verification should instead predict in the space in which
observations are encoded, and present \method{}, a decoder-free,
action-conditioned world model that predicts the representation of the next
screen directly, trained without semantic annotation on 1.85M real GUI
transitions.  Verification reduces to a vector comparison, and the same
signal reveals whether a mismatch is harmful.
We evaluate on \bench{}, a diagnostic where each credential screen appears
under both a legitimate and a hijacked transition, so detectors that see
only the screen are at chance by construction.
The training-free score reaches 0.987 AUC at 17\,ms per decision, on par
with the strongest closed-source VLMs and about ten AUC points above
generative GUI world models at over three orders of magnitude lower latency.
The residual direction reaches 0.953 AUC at separating harmful from benign
violations, where prompted VLMs are near chance. Further analyses show that the prediction is a usable future state rather
than an anomaly score.  World models have mostly served as simulators or
planners; our results point to a third role, verification, for which
predicting in representation space is the natural design.
\end{abstract}

\section{Introduction}
\label{sec:introduction}

A login screen that asks for credentials is unremarkable when the user has
just tapped \emph{Sign in}.  The same screen, pixel for pixel, appearing after
a tap on \emph{View order} is an attack in progress
(Figure~\ref{fig:teaser}, right).  Nothing on the screen distinguishes the two
cases; what differs is the transition that produced it.  GUI agents are
exposed to this because they act through interfaces produced by third
parties (apps, web pages, advertisements) whose honesty cannot be taken for
granted, and attacks on them redirect what happens after an action while
every screen stays plausible~\citep{qian2026rebinding,shi2026webdecept,korgul2026itstraptaskredirectingagent,yang2025riosworldbenchmarkingriskmultimodal,sun2026agenthijackbenchmarkingcomputeruse,cuvin2026decepticondarkpatternsmanipulate}.
Unlike a physics simulator, the environment here may work against the
agent.  Integrity, for these agents, is a property of transitions
rather than screens.

Judging a transition requires an expectation of what should have followed
the action, which is what a world model provides, and a growing line of
work builds world models for graphical
interfaces~\citep{luo2025vimogenerativevisualgui,li2025mobileworldbench,koh2026gworld,zheng2026code2world,xu2026appdeltaworld}.
All of them output their predictions as rendered artifacts, whether text,
screenshots, or executable code~\citep{xu2026guides}
(Figure~\ref{fig:teaser}a).  Rendering makes a
prediction inspectable by a person but hard to check by a machine: a
paragraph, a DOM tree, or an approximately rendered image is not the same
kind of object as the screenshot the agent observes, so whether the two
agree is a judgment, not an operation, and the judgment falls to a
second model that must itself be trusted.  A verifier that needs a judge
has not removed the problem, only moved it.  The cost follows
from the same fact: rendering takes one to two minutes per transition in
current generative GUI world models (\S\ref{sec:risk-results}), before the
judge is run.

\begin{figure}[t]
  \centering
  \includegraphics[width=0.98\textwidth]{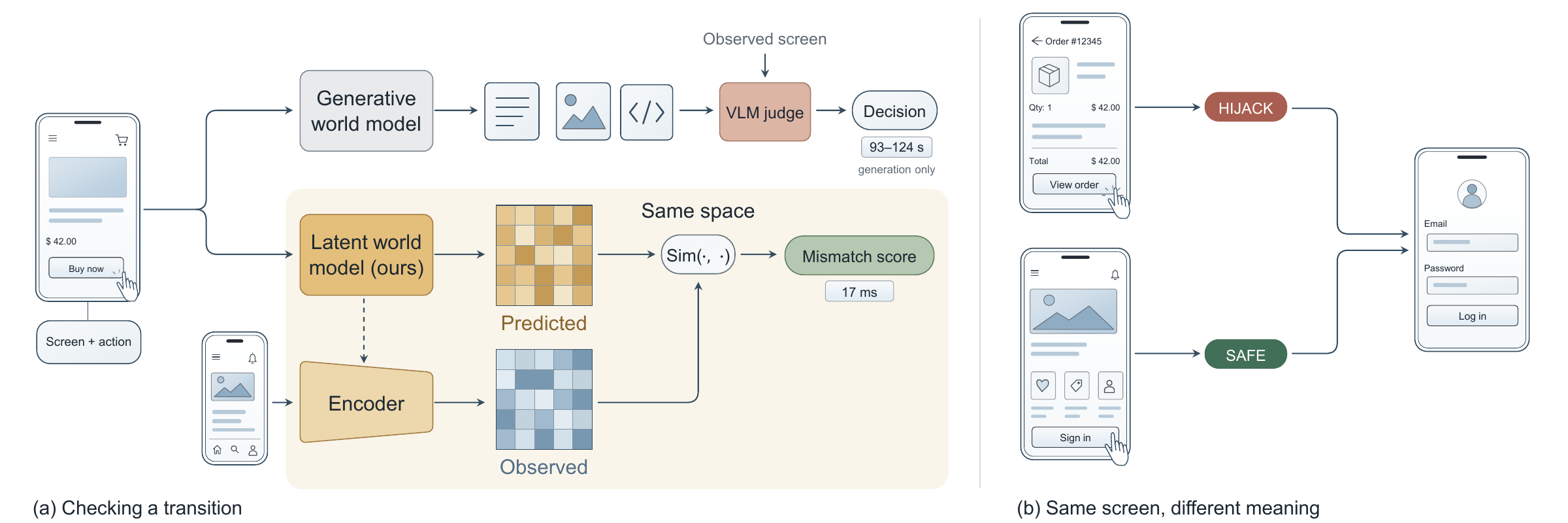}
  \caption{\textbf{The future as an interface.}
  \emph{(a):}~Existing GUI world models render the future as text, pixels,
  or code, at a cost of seconds to minutes per decision.  \method{} predicts
  the future in representation space, in milliseconds.
  \emph{(b):}~A pixel-identical credential screen is legitimate after
  \emph{Sign in} but hostile after \emph{View order}.  Safety is a property
  of the transition, not the screen.}
  \label{fig:teaser}
\end{figure}

World models have mostly served at runtime as simulators, which show
possible futures, or as planners, which search over
them~\citep{schrittwieser2020muzero,grimm2020value,hansen2022tdmpc}.  GUI agents call for a
third role, verification, in which the model checks the future that
actually arrived~\citep{lightman2024let,yu2026seerguard,yang2026miragemobileagentsimplicit,cao2026mobiledreamergenerativesketchworld}.  Latent planners already compare a predicted
representation with the encoding of a goal
image~\citep{zhou2024dino,assran2025vjepa2}; a verifier points the same
comparison at the observed outcome, which no GUI world model, to our
knowledge, has been built to do.  We present \method{} (Latent GUI World
Model), an action-conditioned predictor that maps the current screen and a
structured action to the next screen's representation, trained on real
transitions without semantic supervision.  The recipe is that of
joint-embedding predictive architectures~\citep{bardes2024vjepa,assran2023ijepa,fu2025uijepa}, with its
role inverted: there the predicted representation is a pretext for learning
an encoder and is discarded at test time, whereas here it is the product.
Because prediction and observation share an embedding space, a single
cosine similarity, evaluated in 17\,ms, decides whether a transition
matched expectation.  No decoder or judge is involved, and the score never
sees a labeled attack.

The residual between predicted and observed futures is a vector: its
magnitude says how far reality departed from expectation, its direction
says what it departed into.  The distinction matters in practice.  A push
notification or a server-side layout change surprises a model of normal
dynamics without harming anyone, so a deployable monitor needs both
signals.  \method{} answers both questions with two readouts of one
object.  The residual magnitude detects a violation of the learned dynamics
and needs no training; a linear head on its direction separates harmful violations from benign ones.

We evaluate where the difference between inspecting screens and inspecting
transitions can be made exact.  \bench{} (Right Screen, Wrong Transition)
is a donor-paired diagnostic of 222 pairs, in each of which a credential screen
appears once as a legitimate outcome and once, pixel-identical, as a
hijacked one, so any detector that depends only on the resulting screen has
an expected AUC of exactly 0.5 (\S\ref{sec:benchmark}).  On this test the
training-free score reaches 0.987 AUC, on par with the strongest closed-source VLMs (Gemini 3.7 Flash, GPT-5.6
Luna) and about ten points above generative GUI world models (gWorld,
Code2World), which hold the same kind of expectation but render it, at more than three
orders of magnitude lower latency.  The residual head separates harmful
from benign violations at 0.953 AUC, where every prompted VLM is near
chance.  Pretraining diagnostics confirm that the
prediction is a usable future state, not merely an anomaly score
(\S\ref{sec:pretraining-results}).

Our contributions are as follows.
\begin{itemize}
\item \textbf{World models as verifiers.}  We formulate GUI-agent runtime
safety as transition integrity, identify a world model's expectation
violation as its natural test statistic, and argue that checking it without
a judge requires a prediction directly comparable to the encoded
observation, which no rendered GUI world model provides.
\item \textbf{\method{}.}  An action-conditioned GUI world model with no
decoder whose predictions live in the observation encoder's space, trained without semantic annotation.  A shared positional
basis binds touch coordinates to visual patches, and a change-weighted
objective keeps the training signal on what the action changed.
\item \textbf{Residual readouts.}  The magnitude of the residual detects
violated dynamics with no training and its direction, through a linear
head, interprets harm.  We show that these are separable capabilities, and
that a rendered interface cannot expose them without re-encoding its own
output.
\item \textbf{A controlled diagnostic.}  \bench{} places pixel-identical
outcomes under opposite labels, making outcome-only detection provably at
chance, and on it a 261M-parameter model with no training on the task is
on par with frontier closed-source VLMs.
\end{itemize}

\section{\method{}: Predicting GUI Futures in Representation Space}
\label{sec:method}

Let $P^\star(s_{t+1}\mid s_t,a_t)$ denote the distribution of outcomes that
an uncompromised interface produces in response to action $a_t$ on screen
$s_t$.  A transition $(s_t,a_t,s_{t+1})$ has integrity when its outcome is
compatible with $P^\star(\cdot\mid s_t,a_t)$.  A hijack produces an outcome
that is not, even though the same screen may be entirely compatible with
$P^\star(\cdot\mid s',a')$ for some other pair; this is what the credential
example in the right panel of Figure~\ref{fig:teaser} shows.  Because the same $s_{t+1}$ can
have integrity under one context and not under another, no function of
$s_{t+1}$ alone can measure it.  The statistic has to compare what was
expected given $(s_t,a_t)$ with what was observed, and a world model that
approximates $P^\star$ is the natural source of the expectation.

The predictor that supplies it is architecturally conventional, and
deliberately so.  \method{} is a joint-embedding predictive
architecture~\citep{assran2023ijepa} carried from masked-image prediction to
action-conditioned transitions.  What is new is not a block in the network
but what its output is for, and the design decisions below follow either
from that use or from a property of GUI dynamics.  Three such properties
matter.  Transitions are discrete and sparse: a tap flips one toggle or
replaces the page wholesale, so a uniform objective spends most of its
capacity on static chrome, and we counter this with change-weighted
prediction (\S\ref{sec:training}).  Actions are heterogeneous: discrete
types, continuous coordinates, and free-form text arrive together, and a
tap means nothing apart from where it lands, so touch coordinates and
visual patches share one positional basis (\S\ref{sec:architecture}).
Finally, the consumer of the prediction is a program, not a person,
so nothing below decodes the future back to pixels; every downstream use
consumes the representation itself (\S\ref{sec:integrity}).
Figure~\ref{fig:method} gives the overview.

\subsection{Action-conditioned latent prediction}
\label{sec:architecture}

\begin{figure}[t]
  \centering
  \includegraphics[width=0.98\textwidth]{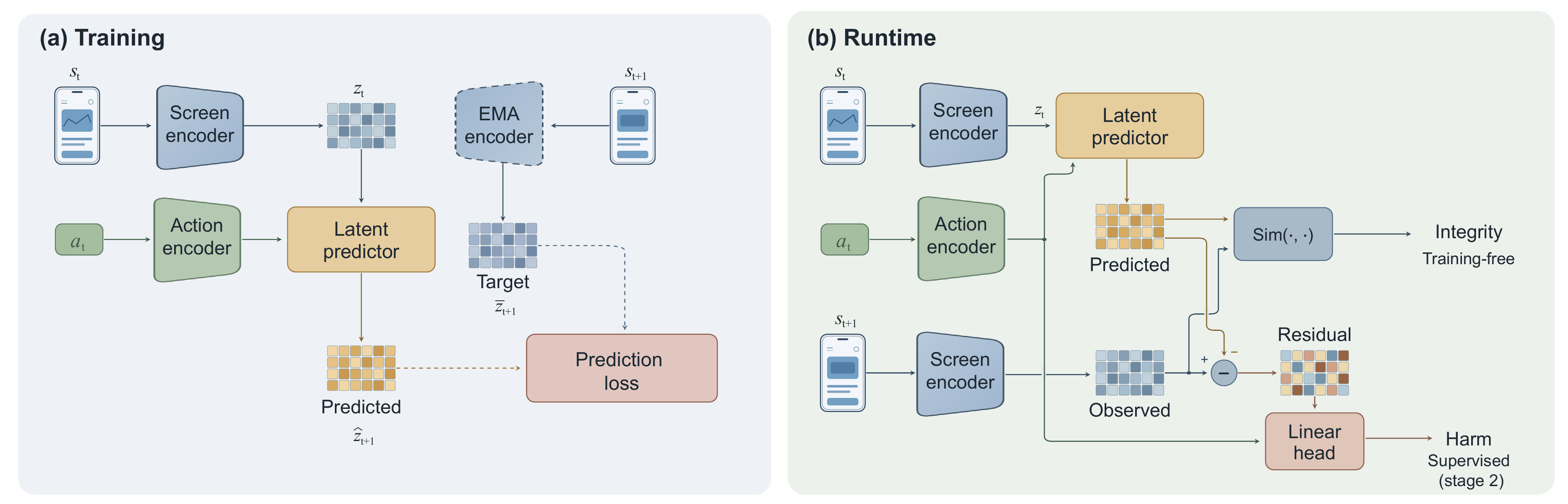}
\caption{\textbf{One predicted future, two closed-form readouts.}
\emph{(a) Training:} an online screen encoder and spatially grounded
action tokens condition a latent predictor; an EMA encoder supplies the
next-state target, and changed patches receive higher loss weight.  There
is no decoder anywhere in the model.  \emph{(b) Runtime:} because
predicted and observed futures share one space, the alignment gap gives a
training-free integrity score (surprise detects), while the directional
residual feeds a linear head that separates harmful from benign violations
(direction interprets).  Training-only paths are dashed.}
\label{fig:method}
\end{figure}

Given a screenshot $s_t$ and a structured action $a_t$, an online encoder
$f_\theta$ maps the current screen to a grid of patch tokens $\zt$, an
action encoder $e_\psi$ maps $a_t$ to a sequence of action tokens, and a
Transformer predictor $g_\phi$ produces the predicted future:
\begin{equation}
  \zhat = g_\phi\!\bigl(\zt,\,e_\psi(a_t)\bigr),
  \qquad
  \ztarget = \operatorname{sg}\!\bigl[f_\xi(s_{t+1})\bigr],
  \label{eq:forecast}
\end{equation}
where $f_\xi$ is an exponential-moving-average copy of $f_\theta$ that
supplies the prediction target and $\operatorname{sg}$ denotes
stop-gradient.  Both $\zhat$ and $\ztarget$ are spatial token sequences in
$\mathbb{R}^{512\times 768}$; the prediction is a full spatial map of the
expected next screen.  We keep the map because the change-weighted
objective of \S\ref{sec:training} is defined per patch.

The encoder $f_\theta$ is a DINOv2 ViT-B/14~\citep{oquab2024dinov2}
fine-tuned with layer-wise learning-rate decay; it maps a
$448{\times}224$ screenshot to a $32{\times}16$ grid of 512 patch tokens of
dimension~768.  The predictor $g_\phi$ is a 12-layer Transformer of
matching width, and the target encoder receives no gradient.  At inference
only $f_\theta$, $e_\psi$, and $g_\phi$ execute (172.86M parameters); the
EMA encoder and an inverse-dynamics head (\S\ref{sec:training}) exist only
in the 261M training graph.

GUI actions combine discrete semantics with spatial specificity.
\emph{Back} and \emph{scroll} mean the same thing wherever they occur,
whereas a tap is inseparable from its coordinates.  We therefore encode
each action as the concatenation of (i)~a learned action-type token,
(ii)~a frozen MiniLM sentence embedding~\citep{wang2020minilm} of the
associated text, and (iii)~for spatially grounded actions, a
coordinate token drawn from the same two-dimensional sinusoidal basis used
for the visual patch grid.  A touch at $(x,y)$ and the patch centered at
$(x,y)$ thus enter the predictor with the same positional encoding instead
of through two unrelated coordinate systems.  Non-spatial actions use learned discrete
embeddings.

\subsection{Learning sparse GUI transitions}
\label{sec:training}

Training is fully self-supervised.  Given paired screenshots
$(s_t,s_{t+1})$ and the structured action $a_t$, the EMA-encoded next
screen is the target, and no semantic annotation of any kind is used.

A typical GUI action changes only a small region of the screen, so a
uniform loss over all 512 tokens is dominated by the easy task of
reproducing persistent background.  To keep the objective on what the
action actually did, we compute a binary patch-level change mask $m_p$
from each image pair and upweight the changed region:
\begin{equation}
  \mathcal L_{\mathrm{pred}}
  = \frac{\sum_p w_p\,d(\hat{\mathbf z}_{t+1,p},\bar{\mathbf z}_{t+1,p})}
  {\sum_p w_p},
  \qquad
  w_p = 1+(\lambda_{\mathrm{chg}}-1)m_p,
  \label{eq:loss}
\end{equation}
where $d$ is the Smooth-L1 distance and $\lambda_{\mathrm{chg}}=8$.  The mask carries no semantic
label.  It rebalances the objective so that the transition signal is not
drowned out by the static majority, and the transition-specific
information that verification later relies on lives in these patches.

\paragraph{Auxiliary objectives.}
Two auxiliary objectives protect properties that the readouts of
\S\ref{sec:integrity} depend on.  Variance and covariance
regularization~\citep{bardes2022vicreg} prevents collapse to a low-rank
subspace.  Collapse would not merely hurt retrieval; in a nearly collapsed
space every observed future looks close to every expectation, and the
alignment gap that serves as our integrity score saturates.  An
inverse-dynamics head, trained to recover $a_t$ from the pair
$(\zt,\ztarget)$, keeps the representation of a transition informative
about the action that caused it.  If the space discarded action-relevant
distinctions, wrong-transition detection would be impossible in principle,
since no scoring rule could then separate a hijacked outcome from the
legitimate outcome of a different action.  The full objective is
\begin{equation}
  \mathcal L = \mathcal L_{\mathrm{pred}}
  + \mathcal L_{\mathrm{var}}
  + \lambda_{\mathrm{cov}}\,\mathcal L_{\mathrm{cov}}
  + \lambda_{\mathrm{inv}}\,\mathcal L_{\mathrm{inverse}}.
  \label{eq:full-loss}
\end{equation}

Single-step accuracy does not guarantee that the predictor composes with
itself.  After 30{,}000 updates, a subset of training batches replaces the
single-step objective with two- or three-step rollouts in which the
predictor's own output is progressively fed back as context, so that it sees the
distribution of its own errors. 
We train on 1.85 million real GUI transitions from five public sources
spanning mobile applications, web interfaces, and cross-app navigation
(\S\ref{sec:experiments}).  The pipeline requires no human labels end to
end.

\subsection{Residual readouts: surprise detects, direction interprets}
\label{sec:integrity}

Leaving out the decoder is not a saving; it is what lets verification
proceed without a judge.  After the agent executes $a_t$ and observes
$s_{t+1}$, the observed future is encoded as $\zobs=f_\theta(s_{t+1})$ and
lies in the same space as the prediction $\zhat$.  The object that
separates them is the residual $\zobs-\zhat$, and we read two statistics
off it: its size, which asks whether the
transition matched expectation, and its direction, which asks what a
mismatch consists of.

\paragraph{Surprise detects.}
The transition-consistency score is the alignment gap between expected and
observed futures,
\begin{equation}
  q_{\mathrm{int}}(s_t,a_t,s_{t+1})
  = 1-\cos(\zhat,\zobs).
  \label{eq:surprise}
\end{equation}
The score is training-free in a strict sense.  It uses no labeled examples
of safe or unsafe behavior, no attack-specific tuning, and no parameters
beyond the pretrained world model.  A low value means the observed outcome
matches the learned dynamics of normal GUI behavior; a high value flags a
transition the model would not have produced.

\paragraph{Direction interprets.}
Inconsistency is not harm.  A push notification, a loading spinner, or a
server-side layout change all violate expectation without threatening the
user, so a deployable monitor has to distinguish harmful violations from
benign ones, and that judgment depends on the direction in which reality
departed, not on how far.  The residual retains this
information, and a linear head suffices to read it:
\begin{equation}
  \mathbf r_{t+1} = \bigl[\zobs-\zhat;\ e_\psi(a_t)\bigr],
  \qquad
  q_{\mathrm{harm}} = \sigma\!\bigl(\mathbf w^{\top}\mathbf r_{t+1}+b\bigr).
  \label{eq:residual}
\end{equation}
The action embedding enters as a conditioning term, not a harm signal. This head is the only supervised component in the system. The two readouts are deliberately kept apart: the magnitude asks whether the transition matched expectation, the direction asks what a mismatch consists of.

\section{\bench{}: Right Screen, Wrong Transition}
\label{sec:benchmark}

Existing GUI safety benchmarks present screens or single steps whose risk
is visible in place~\citep{sun2025ossentinel}, so a detector can score well
by learning what a risky screen looks like.  \bench{} is built around a
stricter constraint: no function of the outcome screen alone may score
above chance.  On such a test the only route to detection is an expectation
conditioned on the preceding screen and action, which isolates the question
this paper asks.

\paragraph{Donor-paired construction.}
Every example is a complete transition $(s_t,a_t,s_{t+1})$ drawn from
held-out AITW and GUIOdyssey
episodes~\citep{rawles2023aitw,lu2025guiodyssey}.  A \emph{hijacked}
example keeps a real screen--action pair $(s_t,a_t)$ and replaces its true
outcome with a credential-requesting screen from an unrelated episode.
The defining step is \emph{donor pairing}: every credential screen used as
a hijacked outcome also appears, pixel-identical, as the true outcome of
its own legitimate transition elsewhere in the set, and the two
occurrences carry opposite labels (Figure~\ref{fig:teaser}, right).

\begin{proposition}[Outcome blindness]
\label{prop:blind}
Let $d$ be any detector whose score is a function of the outcome alone,
$d(s_{t+1})$.  Then the expected \rswt{} AUC of $d$ is exactly $1/2$.
\end{proposition}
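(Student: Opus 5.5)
The argument is a symmetry argument on the donor-paired structure. I would first formalize \rswt{} as a finite multiset of labeled transitions in which the hijacked positives are exactly $\{(s_t^{(i)},a_t^{(i)},c_i)\}_{i=1}^{n}$ and the legitimate negatives are $\{(s'^{(i)}_t,a'^{(i)}_t,c_i)\}_{i=1}^{n}$. Each credential screen $c_i$ thus occurs once with label $1$ and once, pixel-identical, with label $0$; here $n=222$. For ties I take the standard AUC convention,
\begin{equation*}
  \mathrm{AUC}(d)=\frac{1}{n^2}\sum_{i,j}\Bigl(\mathbf 1[d(c_i)>d(c_j)]+\tfrac12\mathbf 1[d(c_i)=d(c_j)]\Bigr),
\end{equation*}
where $i$ indexes positives and $j$ indexes negatives. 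The key observation is that an outcome-only detector sees the same multiset of outcomes $\{c_1,\dots,c_n\}$ on the positive side and on the negative side. Its score distribution is therefore identical on the two classes.

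The main step is to show that the double sum is symmetric under swapping $i$ and $j$. Call the pair term $h(c_i,c_j)$. It satisfies $h(c_i,c_j)+h(c_j,c_i)=1$ for every ordered pair, and $h(c_i,c_i)=\tfrac12$. Summing over all $n^2$ ordered pairs and pairing each $(i,j)$ with $(j,i)$ gives $\sum_{i,j}h=n^2/2$. Hence $\mathrm{AUC}(d)=1/2$ exactly, for any deterministic $d$, and with no expectation required.

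To match the statement's word ``expected'', I would then allow $d$ to be randomized, or allow the benchmark to be a random draw of donor pairs. For randomized $d$, apply the identity realization by realization. The one subtlety is that a random detector might score the two copies of $c_i$ differently. Since the statement says $d$ is a function of $s_{t+1}$, I would either assume both copies receive the same realized score, in which case the deterministic argument applies pointwise, or use i.i.d.\ noise per query, in which case exchangeability of the two copies still gives $\mathbb E[h(c_i,c_j)]+\mathbb E[h(c_j,c_i)]=1$. For random sampling of pairs, the conditional AUC is already $1/2$ for every draw, so the expectation is $1/2$.

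I expect the main obstacle to be pinning down the construction rather than the algebra. The argument needs a strict bijection between hijacked outcomes and legitimate outcomes, with equal class sizes and each donor used exactly once per side. If a donor screen is reused across several hijacks, the identity breaks: outcome multiplicities differ between the classes, and a detector keyed to frequently reused screens could exceed chance. I would therefore state the bijection as part of the formal definition of donor pairing. I would also note that pixel-identity is what makes ``function of the outcome'' assign equal scores, which rules out detection through re-encoding artifacts.
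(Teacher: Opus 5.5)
Your proof is correct, but it takes a different route from the paper's sketch.

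\textbf{The paper's route.} The paper argues locally. Within each donor pair, the hijacked and legitimate occurrences end on a pixel-identical screen, so $d$ ties them. Each pair then contributes $1/2$ to the \emph{paired} AUC, and a randomized detector inherits this in expectation.

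\textbf{Your route.} You treat \rswt{} as the pooled Mann--Whitney AUC over all $n^2$ hijacked--legitimate comparisons. You note that the two classes present the same multiset of outcomes. The antisymmetry $h(a,b)+h(b,a)=1$ then gives $\sum_{i,j}h=n^2/2$.

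\textbf{How the two compare.} The paired statistic is the diagonal $i=j$ of your double sum, so your argument subsumes the paper's. The price is that you need the one-to-one donor structure (equal outcome multisets across classes), which you correctly flag. The per-pair tie argument needs only that each hijacked item has a pixel-identical legitimate partner, so your remark that donor reuse breaks the identity applies to the pooled metric, not the paired one.

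\textbf{Why your version is the more robust one.} The paper defines \rswt{} as ``the AUC between the 222 hijacked and 222 legitimate occurrences,'' which reads naturally as the pooled metric. Under that reading, the cross-pair terms $i\neq j$ are not ties, and the per-pair tie argument alone does not account for them; your swap symmetry is exactly what closes that case.

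\textbf{Randomized detectors.} Your handling is sound and more explicit than the paper's one-line remark. You treat a shared realization pointwise. For i.i.d.\ per-query noise, you use $\mathbb E[h(c_i,c_j)]+\mathbb E[h(c_j,c_i)]=P(A>B)+P(B>A)+P(A=B)=1$ for independent $A,B$, where $A$ and $B$ are the randomized scores of $c_i$ and $c_j$.
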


\textit{Proof sketch.}\enspace
Within each donor pair the hijacked and legitimate instances share a
pixel-identical $s_{t+1}$, so $d$ assigns them equal scores; under the
standard tie convention each pair contributes $1/2$ to the paired AUC, and
a randomized detector inherits the result in expectation.  The guarantee
holds for every outcome-only detector, including ones that do not yet
exist, which is what licenses reading a high \rswt{} score as evidence of
transition information rather than of better screen features.

\paragraph{Controls and metrics.}
Around each hijacked example we place controls for shortcuts the
proposition does not cover.  A \emph{paired safe} transition reuses the
same $(s_t,a_t)$ with its true outcome; \emph{surprising benign}
transitions are real transitions selected for high unexpectedness; and
\emph{benign mismatches} use the same replacement mechanism with harmless
donors.  Together they separate violated expectation from harm, the
distinction that \S\ref{sec:integrity} assigns to direction rather than
magnitude.  The test set contains 966 transitions around 222 donor pairs;
a blind three-annotator audit on 72 stratified items agrees with the
constructive labels at $96.3 \pm 0.8\%$.  The primary metric,
\rswt{}, is the AUC between the 222 hijacked and 222 legitimate
occurrences of the shared donor screens.  Two secondary metrics use the
same logic under different controls: Swap is the AUC between benign
mismatches and paired safe transitions; Cred is the AUC between benign mismatches
and legitimate credential transitions.  All three test whether the outcome
is consistent with its transition.  Supervised methods train on a
separate split of 6{,}786 examples with disjoint episodes; \rswt{} confidence intervals use donor-paired bootstrap
resampling.  The construction is deliberately strict: a credential screen
qualifies only if it also occurs as the legitimate outcome of its own
transition, which limits the set to 222 pairs, but it is the strictness
rather than the size that licenses the inference, and donor-paired
resampling gives the leading systems 95\% intervals with half-widths near
0.01.

\section{Experiments}
\label{sec:experiments}

The experiments answer two questions, one for each property claimed in
\S\ref{sec:introduction}.  Does the latent expectation identify wrong
transitions (\S\ref{sec:risk-results})?  And is the prediction a genuine,
action-conditioned future and not simply an anomaly statistic
(\S\ref{sec:pretraining-results})?

\subsection{Setup}
\label{sec:setup}

We pretrain \method{} on 1{,}845{,}659 transitions (1{,}824{,}824 training,
20{,}835 validation) from Android in the
Wild~\citep{rawles2023aitw}, MiniWoB++~\citep{liu2018miniwob},
GUIOdyssey~\citep{lu2025guiodyssey},
AndroidControl~\citep{li2024androidcontrol}, and AMEX~\citep{chai2025amex},
sampled at 55.0\%, 25.0\%, 9.8\%, 6.9\%, and 3.3\%; official evaluation
partitions are excluded.  Training runs 120{,}000 updates at global batch
size 512 with AdamW in bfloat16 on four A100-80GB GPUs.  The training graph
has 261M parameters, of which 172.86M execute at runtime.  Loss coefficients are
$\lambda_{\mathrm{chg}}=8$, $\lambda_{\mathrm{cov}}=0.01$, and
$\lambda_{\mathrm{inv}}=0.5$.

On \bench{}, every prompted baseline receives the same triplet of before
screen, structured action, and after screen, under a shared zero-shot
prompt that asks whether the outcome is consistent with the transition.
The baselines cover three ways of reaching the same decision: closed-source
APIs, open VLMs prompted identically, and GUI world models that predict
semantic or rendered futures.  Latency is measured at batch size one on a
single A100-80GB\@.  Supervised components train only on the disjoint split of
\S\ref{sec:benchmark}.

\subsection{The decisive test: identifying wrong transitions}
\label{sec:risk-results}

We begin where outcome appearance is provably uninformative
(Proposition~\ref{prop:blind}).  Within each of the 222 donor pairs the
hijacked and legitimate instances end on a pixel-identical credential screen
and differ only in the transition that produced it.

\begin{table}[t]
\caption{\textbf{Main results on \bench{}.}
Detection is pairwise AUC:
\rswt{} compares hijacked and legitimate occurrences of identical screens;
Swap and Cred compare benign mismatches with paired safe transitions and
legitimate credentials, respectively.
Cost is measured per decision at batch size one on one A100-80GB.
Blind human annotators reach 0.958 \rswt{} on 20 hijacked and 20 legitimate examples.}
\label{tab:rswt}
\centering
\fontsize{8}{10}\selectfont
\setlength{\tabcolsep}{2.5pt}
\renewcommand{\arraystretch}{1.18}
\sisetup{
    mode = text,
    reset-text-family = false,
    reset-text-series = false,
    table-number-alignment = center,
    group-separator = {,},
    group-minimum-digits = 4,
}
\begin{tabularx}{\textwidth}{
    @{}
    c
    X
    c
    S[table-format=1.3]
    S[table-format=1.3]
    S[table-format=1.3]
    S[table-format=6.1]
    S[table-format=3.3]
    S[table-format=2.2]
    @{}
}
\toprule
& & &
\multicolumn{3}{c}{\textbf{Detection (AUC\,$\uparrow$)}} &
\multicolumn{3}{c}{\textbf{Cost\,$\downarrow$}} \\
\cmidrule(lr){4-6}
\cmidrule(lr){7-9}
\textbf{Model family} & \textbf{Method}
& \textbf{Params}
& {\textbf{\rswt{}}}
& {\textbf{Swap}}
& {\textbf{Cred}}
& {\textbf{Latency\,(ms)}}
& {\textbf{TFLOPs}}
& {\textbf{VRAM\,(GB)}} \\
\midrule
\multirow{7}{*}{Closed VLMs}
& Gemini 3.7 Flash & ---
  & \bfseries 0.995 & \bfseries 0.998 & \bfseries 0.997
  & \na & \na & \na \\
& GPT-5.6 Luna & ---
  & 0.988 & 0.971 & 0.979
  & \na & \na & \na \\
& Qwen3.7 Plus & ---
  & 0.973 & 0.954 & 0.950
  & \na & \na & \na \\
& Qwen3.7 Flash & ---
  & 0.968 & 0.953 & 0.935
  & \na & \na & \na \\
& Qwen3-VL-Plus & ---
  & 0.918 & 0.881 & 0.867
  & \na & \na & \na \\
& Claude Haiku 4.5 & ---
  & 0.880 & 0.842 & 0.844
  & \na & \na & \na \\
& Qwen3-VL-Flash & ---
  & 0.746 & 0.818 & 0.669
  & \na & \na & \na \\
\midrule
\multirow{5}{*}{Open VLMs}
& GLM-4.6V-Flash & 9B & 0.909 & 0.868 & 0.830 & 3711.6 & 15.316 & 20.72 \\
& Qwen3-VL-8B    & 8B & 0.907 & 0.905 & 0.859 & 1097.8 & 10.924 & 17.72 \\
& Qwen3-VL-4B    & 4B & 0.905 & 0.863 & 0.846 & 1075.1 &  5.530 &  9.05 \\
& Qwen3-VL-2B   & 2B & 0.688 & 0.712 & 0.605 &  829.8 &  2.651 &  4.39 \\
& Qwen2.5-VL-7B & 7B & 0.574 & 0.580 & 0.543 &  819.4 & 11.427 & 16.73 \\
\midrule
\multirow{4}{*}{GUI WMs}
& gWorld-8B        & 8B & 0.885 & 0.897 & 0.837 & 113200.0 &  74.100 & 18.18 \\
& SAWM             & 8B & 0.882 & 0.836 & 0.804 &   1127.7 &  13.449 & 17.77 \\
& Code2World       & 8B & 0.870 & 0.901 & 0.859 &  92900.0 & 110.711 & 18.56 \\
& MobileWorld-HTML & 8B & 0.577 & 0.691 & 0.568 & 124300.0 & 119.321 & 18.64 \\
\midrule
\rowcolor{oursgray}
Latent WM & \textbf{\method{} (ours)} & 261M
  & \bfseries 0.987 & \bfseries 0.970 & \bfseries 0.985
  & \bfseries 17.1 & \bfseries 0.408 & \bfseries 1.09 \\
\bottomrule
\end{tabularx}
\end{table}

\textbf{Verification without a judge.}\enspace
Without benchmark labels, attack examples, or a task prompt, the integrity
score of Eq.~\ref{eq:surprise} reaches \textbf{0.987} \rswt{} AUC and
orders 219 of the 222 donor pairs correctly (Table~\ref{tab:rswt}).
Donor-paired comparisons detect no difference from
the three strongest closed-source systems, Gemini~3.7~Flash
($p{=}0.07$), GPT-5.6 Luna ($p{=}0.83$), and Qwen3.7~Plus
($p{=}0.07$).  The ranking matters less than the form of the score:
the reference band is set by systems that read a question and produce a
reasoned judgment; a geometric discrepancy from a 261M-parameter model,
learned with no benchmark supervision, sits inside that band.

Qwen3.7~Flash trails by 1.9 AUC points ($p{=}0.02$), and every remaining
system is lower with $p{<}0.001$.  On the Swap and Cred axes the integrity
score reaches 0.970 and 0.985, behind only Gemini and GPT on the first and
only Gemini on the second.  All three axes probe the same property: whether
the observed outcome is consistent with the transition that produced it.

Three blind human annotators achieve a mean \rswt{}-equivalent AUC of
0.958 on 20 hijacked and 20 legitimate examples (Table~\ref{tab:rswt}), confirming
that the task is non-trivial even with full visual access and placing
the training-free integrity score within three AUC points of the human
reference.

\textbf{Same expectation, different interface.}\enspace
The generative world models are the closest baselines in kind.  Like
\method{}, they hold an expectation of the next screen conditioned on the
action; unlike it, they render that expectation and rely on a separate VLM
(Gemini~3.7~Flash in our evaluation) to judge whether the rendering matches
the observed screen.  The detour costs both accuracy and time.  gWorld-8B
and Code2World reach 0.885 and 0.870 \rswt{} AUC, about ten points below
the latent score, and need at least 92.9\,s per transition and at least
$182\times$ more FLOPs before the judge is run.  The latent
prediction and the observed encoding are compared directly instead. This is the comparison the interface argument of \S\ref{sec:integrity} predicts: these systems form the same kind of expectation that \method{} does and differ most visibly in how they expose it.

\textbf{The signal is a learned dynamics prior.}\enspace
The benchmark controls locate where the signal comes from.
Any detector that sees only the outcome is fixed at 0.500 by
construction (Proposition~\ref{prop:blind}).  The strongest
single-frame probe, trained on the pooled $\zt$ from our encoder,
reaches 0.796; a supervised classifier that receives the $f_\theta$ and
$f_\xi$ representations of both frames and the action, but not
the predictor's output $\zhat$, reaches 0.833.  Mismatching
$\zhat$ to a different transition drops the residual head from
0.930 to 0.635, confirming that the predicted future carries
transition-specific content.  Neither screen appearance nor the
presence of an additional vector explains the result.  The learned
transition prior does.

\textbf{Cost of a decision.}\enspace
A complete integrity decision costs 0.408~TFLOPs, 1.09\,GB of peak memory,
and 17.1\,ms, which is $27\times$ fewer FLOPs, $16\times$ less memory, and
$64\times$ lower latency than Qwen3-VL-8B under the identical protocol.
Batching the VLM eightfold still leaves a $10.9\times$ latency gap.  At this cost the gate can run on every
step of an agent's trajectory instead of a sampled subset.

\textbf{Direction interprets harm.}\enspace
Not every violation of expectation is harmful: a push notification or a
layout change surprises the model without threatening the user.  To isolate
this distinction we define Harm AUC as the pairwise AUC between the
hijacked transitions and the benign mismatches, both of which violate
expectation but only the former is harmful.  On this task \method{}'s training-free integrity score
(Eq.~\ref{eq:surprise}), which measures the size of the departure, achieves
only 0.513 Harm AUC: a transition can be surprising without being
dangerous, so magnitude alone cannot separate the two.  All prompted VLMs
score similarly (0.512 to 0.614 under the same transition-consistency
prompt).  The residual head of
Eq.~\ref{eq:residual}, reading the direction of the departure rather than
its magnitude, reaches \textbf{0.953}.

The profile reverses on wrong-transition detection (\rswt{}), where the
integrity score is the stronger statistic (0.987 vs.\ 0.930 for the
residual head).  The two readouts therefore measure different things, both
read from one predicted future: one detects that expectation was violated,
the other interprets what the violation consists of.  The residual head is the only
supervised component, a single linear layer on the pooled residual fit on
the disjoint split of \S\ref{sec:benchmark}, adding no measurable runtime
cost.  This adaptation has no counterpart
in rendered interfaces: a judge returns a verdict, not a directional
residual that a linear head can reinterpret.

\subsection{The verifier is a world model}
\label{sec:pretraining-results}

A high \rswt{} score shows that the expectation works; this section
shows what it is by ruling out four degenerate explanations.

\textbf{Not a copy of the present.}\enspace
A trivial predictor could score well by copying the current screen,
since most patches do not change after a single action.  To test
this we compare the predicted latent $\zhat$ against two references:
the true next-state encoding $\ztarget$ and the current-state
encoding $\zt$.  The prediction is far closer to the future
(cosine similarity 0.618) than to the present (0.245), and a
persistence baseline that simply reuses $\zt$ as the forecast
reaches only 0.230.  On patches the action actually changed, the
gap widens further (copy score $-0.384$, the changed-patch mean of
$\cos(\zhat,\zt)-\cos(\zhat,\ztarget)$).  The predictor has learned to
forecast what will change, not to echo what is already on screen.

\textbf{Not a replay of training data.}\enspace
A model trained on 1.85M transitions might succeed by memorizing
outcomes rather than generalizing.  To test this we rank the true
next-state encoding among ${\sim}100$ same-source held-out candidates
by cosine similarity to $\zhat$, and compare against a memorization
baseline that finds the nearest same-action-type training transition and
returns its successor.  The prediction places the correct screen
first 68.3\% of the time (macro R@1 across five sources), against
44.6\% for the memorization baseline, and the model leads in every
source.  The predictions are therefore
specific to each transition, not retrieved from a library of
training outcomes.

\textbf{Action-conditioned at the grain verification requires.}\enspace
Wrong-transition detection depends on the prediction changing when the
action changes.  To measure this we replace the recorded action with
a counterfactual and ask how often the original forecast stays closer
to $\ztarget$ (win rate, chance ${=}0.5$).  Swapping the action type
(e.g.\ tap $\to$ scroll) yields 0.891; swapping to a different target
of the same type (e.g.\ tap on A $\to$ tap on B) yields 0.800; even
small touch-coordinate perturbations in normalized $[0,1]$ space
register (0.618 at offset 0.05).  The prediction tracks the
specific action at the spatial grain that verification requires.

\begin{wraptable}{r}{0.44\textwidth}
\vspace{-12pt}
\caption{Design ablations (ViT-S, 15k steps, one seed).
Each row removes one component and measures the metric its design
targets.}
\label{tab:ablations-body}
\centering
\scriptsize
\setlength{\tabcolsep}{3pt}
\begin{tabular}{@{}llrrr@{}}
\toprule
Component & Metric & With & W/o & Change \\
\midrule
Rollout training   & Cos@10 $\uparrow$    & 0.612 & 0.553 & $\mathbf{-10\%}$ \\
Inverse dynamics   & $\lvert$Copy$\rvert$ $\uparrow$ & 0.286 & 0.193 & $\mathbf{-32\%}$ \\
VICReg             & Eff.\ rank $\uparrow$ & 43.7  & 22.5  & $\mathbf{-49\%}$ \\
\bottomrule
\end{tabular}
\vspace{-8pt}
\end{wraptable}
\textbf{Composes over multiple steps.}\enspace
A predictor that works for one step may fail when it must consume its
own imperfect output repeatedly.  We feed the prediction back as
context and measure cosine similarity between the recursive forecast
and the true encoding at each horizon.  At horizon ten the agreement
is 0.562, against 0.455 for a persistence baseline and 0.092 for an
unrelated screen from the same batch.
The prediction remains a usable future state, not just a one-step
anomaly score.

\subsection{Design ablations}
\label{sec:ablations}

Each component in \S\ref{sec:architecture}--\ref{sec:training} was
introduced with a specific purpose.  We test each by removing it and
measuring the metric that purpose predicts
(Table~\ref{tab:ablations-body}), at ViT-S scale with 15k updates.

\textbf{Rollout training.}\enspace
This component lets the predictor see its own imperfect outputs during
training, so it learns to handle the errors that accumulate over
multiple steps (\S\ref{sec:training}).  Without it, the cosine
agreement between a ten-step recursive forecast and the true encoding
drops by 10\% relative (0.612 $\to$ 0.553).  The predictor still
works for one step, but drifts when it must consume its own
predictions repeatedly.

\textbf{Inverse dynamics.}\enspace
This auxiliary objective forces the representation to preserve which
action caused a transition (\S\ref{sec:training}).  Without it, the
absolute copy score---how much closer the prediction is to the true
future than to the current screen---drops by 32\% (0.286 $\to$
0.193).  The predictor relies more on copying the current screen.

\textbf{VICReg.}\enspace
Variance-covariance regularization prevents all representations from
collapsing into a small subspace (\S\ref{sec:training}).  Without it,
the effective rank of the prediction covariance drops by 49\%
(43.7 $\to$ 22.5).  When representations collapse, every prediction
looks similar to every observation, so the cosine gap between them is
always small and the integrity score can no longer flag anything.

All ablations use one seed at a smaller scale than the main model.
A three-seed replication of the control confirms that cross-seed
variance is small.

\section{Related Work}
\label{sec:related}

\paragraph{GUI and latent world models.}
Existing GUI world models materialize futures as text, code, or rendered
patches~\citep{li2025mobileworldbench,liu2026scalingguiagentsvisual,
koh2026gworld,zheng2026code2world,xu2026appdeltaworld,
yang2026miragemobileagentsimplicit,cao2026mobiledreamergenerativesketchworld,
luo2025vimogenerativevisualgui,xu2026guides}, so checking a transition
requires a judge, at a cost of seconds to
minutes (\S\ref{sec:risk-results}); rendering remains the right interface
when a person must inspect the prediction, and we view the two families as
complementary.  Outside GUIs, latent models either serve planning without
matching
observations~\citep{schrittwieser2020muzero,grimm2020value,hansen2022tdmpc}
or train an encoder and discard the predictor
afterwards~\citep{assran2023ijepa,bardes2024vjepa,fu2025uijepa}; prior
arguments for dropping reconstruction concern learning efficiency or
planning sufficiency, whereas ours concerns whether the prediction can be
checked against reality without a second model.  The closest relatives are
latent planners that compare a predicted representation to a goal
encoding~\citep{zhou2024dino,assran2025vjepa2}; \method{} points the same
geometry at the observed outcome, turning it into a verification check, in
a domain whose mixed discrete-spatial actions and sparse transitions
demand interfaces these lineages did not need.

\paragraph{GUI safety and prediction-error monitoring.}
Prediction error is a classical statistic for exploration and
out-of-distribution
detection~\citep{pathak2017curiosity,burda2019rnd}; ours is
action-conditioned, shares a space with the encoded observation, and is
read twice (magnitude for detection, direction for
interpretation), where prior work stops at a scalar.  GUI safety
evaluations target risks visible in the current
step~\citep{sun2025ossentinel,sun2026agenthijackbenchmarkingcomputeruse,
cuvin2026decepticondarkpatternsmanipulate,
yang2025riosworldbenchmarkingriskmultimodal},
yet hidden-transition attacks redirect outcomes while every screen stays
benign~\citep{qian2026rebinding,shi2026webdecept}.
SeerGuard~\citep{yu2026seerguard} predicts consequences before execution
but renders them as text and learns the risk judgment jointly; our
integrity signal is geometric, post-hoc, and training-free, with harm
interpretation isolated in a linear head.  The verifier here checks the
environment's response rather than the agent's own
reasoning~\citep{lightman2024let}, and \bench{} is the first GUI benchmark
where outcome-only detection is at chance by construction.

\section{Discussion and Conclusion}
\label{sec:discussion}

For a program that must check whether reality matched expectation,
predicting in representation space is sufficient and three orders of
magnitude cheaper than rendering.  The two interfaces are complementary:
the latent gate can verify every step at 17\,ms, so a renderer or VLM
judge need only be invoked on the flagged minority.  Nothing in the score
is specific to attacks; the same residual flags missed actions, layout
changes, or unexpected modals, so the verifier doubles as a reliability
monitor.  The contribution also shifts the adversary's problem: an
outcome-only monitor is defeated by any legitimate-looking screen, while an
expectation-based monitor requires forging an outcome in the agent's own
representation space, a harder objective whose difficulty we have not yet
quantified.

We asked whether a GUI world model has to render the future, and answered
by predicting in the space where observations already live.  That a cosine
gap from a 261M-parameter self-supervised model sits in the band set by
frontier prompted-reasoning systems suggests that, for transition
integrity, the hard part is having the right expectation rather than
reasoning about it.  World models have served as simulators and as
planners; for agents whose environment cannot be trusted, they can also
serve as verifiers, and for that role rendering is for people and
prediction is for machines.

\bibliography{references}
\bibliographystyle{iclr2027_conference}

\end{document}